%% file: main.tex
\documentclass[onecolumn]{IEEEtran}
\IEEEoverridecommandlockouts
\usepackage{hyperref}
\usepackage{cite}
\usepackage{amsmath,amssymb,amsfonts}
\usepackage{algorithmic}
\usepackage{graphicx}
\usepackage{textcomp}
\usepackage{xcolor}
\usepackage{blindtext}
\usepackage{amsthm}
\newtheorem{definition}{Definition}
\newtheorem{proposition}{Proposition}
\usepackage[margin=1in]{geometry}
\usepackage{array}
\usepackage{subcaption}
\usepackage{multirow}
\usepackage{booktabs}
\usepackage{pdfpages}
\usepackage{fancyhdr}
\def\BibTeX{{\rm B\kern-.05em{\sc i\kern-.025em b}\kern-.08em
    T\kern-.1667em\lower.7ex\hbox{E}\kern-.125emX}}
\begin{document}

\title{Progressive Semantic Communication for Efficient Edge-Cloud Vision-Language Models}

\author{
Cyril Shih-Huan Hsu$^{*}\qquad$\thanks{
$^{*}$Informatics Institute, University of Amsterdam, The Netherlands.
Email: \texttt{\{s.h.hsu, c.papagianni\}@uva.nl}
}
\and
Wig Yuan-Cheng Cheng$^{\dagger}\qquad$\thanks{
$^{\dagger}$Open-EP Community,
Taoyuan City 330049, Taiwan (R.O.C.).
Email: \texttt{onlywig@gmail.com}
}
\and
Chrysa Papagianni$^{*}$
}

\maketitle
\thispagestyle{fancy}

\begin{abstract}
Deploying Vision–Language Models (VLMs) on edge devices remains challenging due to their substantial computational and memory demands, which exceed the capabilities of resource-constrained embedded platforms. Conversely, fully offloading inference to the cloud is often impractical in bandwidth-limited environments, where transmitting raw visual data introduces substantial latency overhead. While recent edge–cloud collaborative architectures attempt to partition VLM workloads across devices, they typically rely on transmitting fixed-size representations, lacking adaptability to dynamic network conditions and failing to fully exploit semantic redundancy.
In this paper, we propose a progressive semantic communication framework for edge–cloud VLM inference, using a Meta AutoEncoder that compresses visual tokens into adaptive, progressively refinable representations, enabling plug-and-play deployment with off-the-shelf VLMs without additional fine-tuning. This design allows flexible transmission at different information levels, providing a controllable trade-off between communication cost and semantic fidelity.
We implement a full end-to-end edge–cloud system comprising an embedded NXP i.MX95 platform and a GPU server, communicating over bandwidth-constrained networks.
Experimental results show that, at 1 Mbps uplink, the proposed progressive scheme significantly reduces network latency compared to full-edge and full-cloud solutions, while maintaining high semantic consistency even under high compression. The implementation code will be released upon publication at \url{https://github.com/open-ep/ProSemComVLM}.
\end{abstract}
\begin{figure}[h]
    \centering
\includegraphics[width=0.9\columnwidth]{}
    \caption{\textbf{Progressively refinable latent representation via Meta AutoEncoder.} The fixed-length latent is structured as an ordered token sequence, where prefixes provide increasingly accurate approximations of the full representation. This enables monotonic improvement in reconstruction quality as transmission progresses, supporting efficient incremental communication and precise control over the trade-off between bandwidth usage and semantic accuracy.}
    \label{fig:concept}
\end{figure}


\section{Introduction}
Vision-language models (VLMs) have rapidly emerged as a cornerstone of multimodal artificial intelligence, enabling machines to jointly reason over visual and textual inputs for tasks such as image captioning, visual question answering, and multimodal interaction. Despite their strong performance, the deployment of VLMs in real-world systems remains challenging due to their substantial computational and memory requirements. Large-scale models often exceed the capabilities of resource-constrained edge devices, while purely cloud-based solutions introduce significant latency and communication overhead, limiting their applicability in real-time and bandwidth-constrained scenarios.

To address these challenges, recent research has explored collaborative edge–cloud (or edge-to-server) inference paradigms, which leverage the modular structure of VLMs to distribute computation across devices. In such frameworks, lightweight processing, such as visual encoding or preliminary inference, is performed at the edge, while more computationally intensive language reasoning is delegated to centralized servers. Beyond simple partitioning, more advanced approaches further optimize communication efficiency by transmitting only task-relevant information or adaptively requesting additional data based on model uncertainty, thereby balancing accuracy, latency, and bandwidth usage. These developments highlight a growing trend toward semantic-aware and resource-efficient VLM deployment, where the goal is not only to distribute computation, but also to minimize unnecessary data transmission while preserving task performance.

However, existing edge–cloud VLM systems still face several key limitations. First, most approaches rely on transmitting either raw pixels or fixed-size latent representations, which are inherently non-adaptive and fail to account for dynamic network conditions or task-dependent semantic requirements (e.g.,~\cite{Li2025DistributedVE_ec3}). 
Second, while some methods incorporate semantic-aware or uncertainty-driven transmission strategies (e.g.,~\cite{song2025collaborative_ec4, 10947590}), they typically adopt non-progressive mechanisms, where additional information is transmitted in coarse steps without reusing previously transmitted content. This results in inefficient communication under iterative transmission and lacks a principled way to achieve fine-grained control over the trade-off between communication cost and semantic fidelity.
These limitations highlight the need for a more flexible framework that enables adaptive, progressive, and semantically meaningful transmission for edge–cloud VLM inference.

In this paper, we propose a progressive semantic communication framework for efficient edge–cloud VLM deployment, with the following contributions:
\begin{itemize}
\item[1.] We introduce a \textit{progressively refinable latent representation} via a plug-and-play Meta AutoEncoder (MetaAE), which enables incremental transmission and reuse of partial visual semantics, thereby significantly improving flexibility and communication efficiency compared to conventional non-progressive schemes.
\item[2.] We design a \textit{semantic quality-aware transmission control} that adaptively determines whether additional information should be transmitted, enabling a controllable trade-off between communication cost and task performance.
\item[3.] We implement a \textit{real-world end-to-end edge–cloud testbed} on an embedded NXP i.MX95 platform and a GPU server, demonstrating the practical feasibility of the proposed framework under bandwidth-constrained conditions, and validating its effectiveness in reducing latency while preserving semantic consistency.
\end{itemize}
The rest of this paper is organized as follows. Section~\ref{sec:related} provides context through related work. Section~\ref{sec:method} introduces the proposed framework. We then detail our experimental setup in Section~\ref{sec:exp} and evaluate the framework in Section~\ref{sec:results}. We conclude this work in Section~\ref{sec:conclude}.

\section{Related Work}\label{sec:related}
\subsection{Vision Language Models}
VLMs have progressed toward general-purpose multimodal reasoning by coupling pretrained vision encoders with large language models and improving cross-modal alignment and instruction following. Early modular designs such as Flamingo~\cite{alayrac2022flamingo} show that bridging powerful vision-only and language-only backbones enables flexible few-shot prompting, while BLIP-2~\cite{pmlr-v202-li23q} highlights a complementary efficiency point: keeping large backbones frozen and training a lightweight connector can yield strong VL capabilities with fewer trainable parameters. 
Instruction-tuned models such as LLaVA~\cite{NEURIPS2023_LLAVA} further operationalize VLMs as conversational assistants by training on instruction-following multimodal data, shifting research emphasis from narrow benchmarks to interactive usage patterns that often demand low latency and robust deployment. 
More recent studies (e.g., VILA~\cite{lin2024vila}) systematically analyze pretraining and data mixture choices that materially affect both performance and engineering cost, while large technical reports such as Qwen3-VL~\cite{bai2025qwen3} reflect scaling trends that intensify resource pressures and motivate deployment optimization.

\subsection{Semantic Communication}
Semantic communication research reframes transmission objectives from reconstructing raw signals to preserving task-relevant information, increasingly using learned representations aligned to downstream goals. Transformer-based semantic image transmission (WITT~\cite{yang2023witt_sc1}) demonstrates end-to-end learned coding optimized for channel conditions, providing a base layer that can be reused when vision transformers act as the ``front end" of VLMs. 
Explicit vision-language semantic communication is explored in VLM-CSC~\cite{jiang2025visual_sc2}, which transmits high-density textual semantics extracted from images, and incorporates memory and noise-adaptive mechanisms, thereby bridging cross-modal semantics and channel robustness. 
Token-based paradigms (TokCom~\cite{qiao2025token_sc3}) further align with modern VLM/LLM interfaces by treating tokens as the unit of communication and exploiting cross-modal context during reconstruction or generation. VLF-MSC~\cite{ahn2025vlf_sc4} proposes sending a compact shared vision-language feature that conditions both language generation and image generation at the receiver. 
Video TokenCom~\cite{men2026video_sc5} extends this direction by allocating rate/protection based on textual intent and token importance, which suggests principled ways to couple application goals with source–channel adaptation. 

\subsection{Edge-Cloud Deployment}
Edge–cloud deployment work addresses the systems bottlenecks that arise when VLMs must run under tight latency, energy, and bandwidth constraints. Empirical evaluation on smartphones shows that even mobile-oriented VLM stacks can remain CPU-bound and underutilize accelerators, motivating careful runtime and partitioning design~\cite{guerrero2025efficient_ec2}. 
At the architectural level, Distributed VLMs~\cite{Li2025DistributedVE_ec3} exploit the modularity of VLMs by offloading visual encoding to edge devices while retaining language generation on servers, improving throughput via parallelization.
Collaborative strategies can further reduce uplink overhead by transmitting only task-relevant information. For example,~\cite{song2025collaborative_ec4} uses VLM attention for patch selection and an entropy-based uncertainty trigger for selective retransmission. Similarly,~\cite{10947590} introduces a region-of-interest generation mechanism to reduce bandwidth consumption during transmission. These approaches align with the principle of semantic-aware communication in edge–cloud inference loops.

\begin{figure*}[t]
     \centering
     \includegraphics[width=\textwidth]{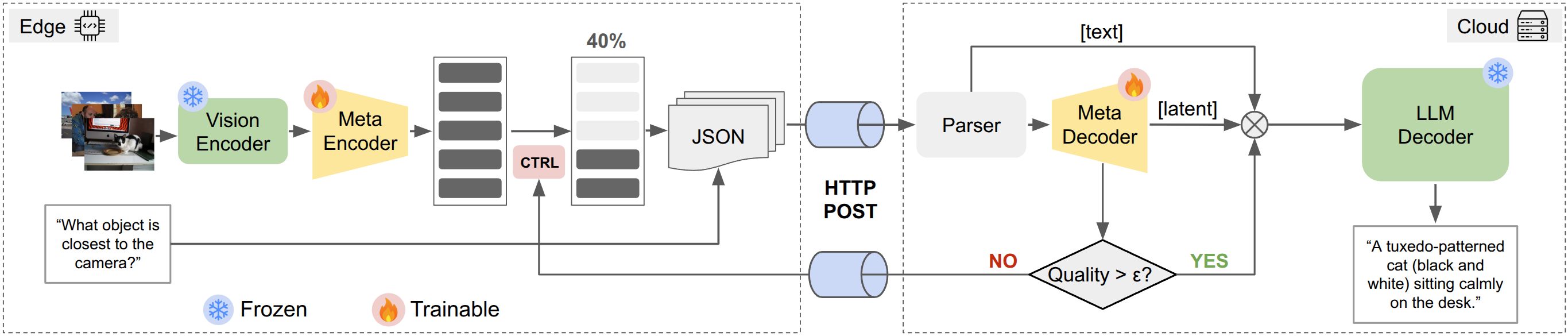}
    \caption{\textbf{System overview of the proposed progressive edge–cloud semantic communication framework.} The edge encodes visual inputs into progressively refinable representations and transmits partial latent segments under bandwidth constraints. The cloud reconstructs the latent via a meta decoder and employs semantic quality-aware transmission control to adaptively request additional information until a target quality threshold is met, after which the LLM decoder generates the final output.}
    \label{fig:system}
\end{figure*}
\section{Methodology}\label{sec:method}
\subsection{System Overview}
We propose an edge–cloud collaborative framework for progressive semantic communication with VLMs, as illustrated in Fig.~\ref{fig:system}. The system is designed to minimize communication overhead while preserving task-relevant semantic information through progressive and adaptive transmission over a practical RESTful interface.
At the \textit{edge side}, an input image is first processed by a lightweight vision encoder, producing high-dimensional latent representations. These latents are further compressed by a meta encoder into a progressively refinable representation.
This representation is explicitly ordered, such that earlier segments capture coarse semantic information, while subsequent segments provide incremental refinements that progressively improve the reconstruction quality.
A transmission control module (CTRL in Fig.~\ref{fig:system}) selects an initial subset (e.g., $40$\%) of the representation based on communication constraints. The selected segments are serialized into a compact JSON payload and transmitted to the cloud via an HTTP POST request.
At the \textit{cloud side}, the received payload is parsed and passed through a meta decoder to reconstruct the latent representation. A semantic quality estimator then evaluates whether the reconstructed latent satisfies a predefined quality threshold $\epsilon$ for the downstream task. If the estimated quality is sufficient, the reconstructed latent is combined with the textual input and forwarded to the LLM decoder to generate the final response.
Otherwise, the system enters a closed-loop refinement process: the cloud sends a feedback signal via an HTTP response requesting additional representation segments from the edge. The edge then transmits further incremental information, which is accumulated at the cloud to progressively improve the reconstruction. This process repeats until the target semantic quality is achieved (see Sec.~\ref{sec:metaAE}).
\subsection{Progressively Refinable Representations}\label{sec:prp}
Let $\mathbf{z} \in \mathbb{R}^{d}$ denote the visual latent produced by the vision encoder for an input image $\mathbf{x}$. The proposed meta encoder $f_{\mathrm{enc}}(\cdot)$ transforms $\mathbf{z}$ into a progressively refinable representation
\begin{equation}\label{eq:enc}
\mathbf{r} = f_{\mathrm{enc}}(\mathbf{z}) \in \mathbb{R}^{d_r}.
\end{equation}

To enable progressive transmission, we partition $\mathbf{r}$ into $K$ chunks:
\begin{equation}\label{eq:latent}
\mathbf{r} = \big[r_1, r_2, \dots, r_K\big],\quad \sum_{k=1}^{K} d_k = d_r.
\end{equation}
where $r_k \in \mathbb{R}^{d_k}$ denotes the $k$-th latent chunk.
Unlike conventional compressed representations, the proposed representation is \emph{explicitly ordered}. Specifically, the chunk index reflects transmission priority: earlier chunks capture coarse semantic information, while later chunks provide progressively finer refinements. This ordered structure enables coarse-to-fine reconstruction from partial transmissions.

\begin{definition}(Latent Transmission Level, \textbf{LTL}).
For a total of $K$ ordered latent chunks, the \emph{Latent Transmission Level} $\ell \in \{1,2,\dots,K\}$ denotes the transmission state in which the first $\ell$ chunks are available at the receiver. The corresponding partial representation is defined as $\mathbf{r}(\ell) = \big[r_1, r_2, \dots, r_{\ell}\big]$.
\end{definition}
Given the cumulative representation $\mathbf{r}(\ell)$, the meta decoder predicts a level-dependent approximation of the original full latent $\mathbf{z}$:
\begin{equation}\label{eq:recon}
\hat{\mathbf{z}}(\ell) = f_{\mathrm{dec}}\!\left(\mathbf{r}(\ell)\right), \quad \hat{\mathbf{z}}(\ell) \in \mathbb{R}^{d}.
\end{equation}
As $\ell$ increases, $\hat{\mathbf{z}}(\ell)$ is expected to approach $\mathbf{z}$ with progressively higher fidelity. We refer to the normalized LTL $\ell/K$ as the \textit{LTL ratio} hereafter for simplicity.

The above formulation enables a progressive transmission scheme, where information is delivered in an incremental and reusable manner. In contrast, conventional non-progressive schemes transmit representations corresponding to each quality level independently, without reusing previously transmitted information. We next show that the proposed progressive scheme is strictly more communication-efficient under iterative transmission.

\begin{proposition}
Assume a representation is divided into $K$ equal-sized chunks, each with transmission cost $b$. To reach transmission level $K$ through sequential refinement, a progressive scheme incurs strictly lower total communication cost than a non-progressive scheme that retransmits the full representation at each level.
\end{proposition}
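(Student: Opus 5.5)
We compare the two schemes by computing the total cost of each under the sequential-refinement protocol of Sec.~\ref{sec:prp}, where the receiver passes through levels $\ell=1,2,\dots,K$ in turn. Under the progressive scheme, moving from level $\ell-1$ to level $\ell$ only requires the new chunk $r_\ell$, because $\mathbf{r}(\ell-1)$ is already at the cloud and is reused to build $\mathbf{r}(\ell)=[\mathbf{r}(\ell-1),r_\ell]$. Each step therefore costs exactly $b$, and the total cost is
\begin{equation*}
C_{\mathrm{prog}}=\sum_{\ell=1}^{K} b = Kb .
\end{equation*}
Under the non-progressive scheme, each level $\ell$ is served by an independent representation. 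Reading the statement as ``retransmits the representation for the current level,'' level $\ell$ contains $\ell$ chunks' worth of data and costs $\ell b$. This gives
\begin{equation*}
C_{\mathrm{non}}=\sum_{\ell=1}^{K}\ell b=\frac{K(K+1)}{2}\,b .
\end{equation*}
If we instead take the literal reading, in which the full representation is resent at every level, the cost is $K\cdot Kb=K^2 b$, and the comparison only becomes easier.

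Next we take the difference:
\begin{equation*}
C_{\mathrm{non}}-C_{\mathrm{prog}}=\frac{K(K-1)}{2}\,b .
\end{equation*}
This is strictly positive as soon as $K\ge 2$ and $b>0$. The ratio $C_{\mathrm{non}}/C_{\mathrm{prog}}=(K+1)/2$ also shows that the saving grows linearly in $K$.

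The main obstacle is not the arithmetic but fixing the model precisely. Three things need to be pinned down. First, we need $b>0$. Second, we need $K\ge2$: when $K=1$ the two schemes coincide, so ``strictly'' fails, and this assumption has to be stated explicitly. Third, we need to say what ``non-progressive'' means. I would handle this by showing the strict inequality for every reading in which level $\ell$ costs at least $\ell b$ and the progressive increment costs exactly $b$. Under that assumption, summing termwise gives $C_{\mathrm{non}}\ge \sum_\ell \ell b > Kb$ for $K\ge 2$, and the strictness comes from the $\ell=2$ term alone.
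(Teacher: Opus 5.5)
Your proof is correct and follows the paper's own argument: both compute $Kb$ for the progressive scheme and $\sum_{\ell=1}^{K}\ell b = \frac{K(K+1)}{2}b$ for the non-progressive scheme, then compare them for $K>1$. Your explicit assumptions $K\ge 2$ and $b>0$, and your remark that the literal $K^2 b$ reading only strengthens the inequality, make precise what the paper's statement leaves implicit; the paper mentions $K>1$ only inside its proof.
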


\begin{proof}
In the progressive scheme, only the incremental chunk $\Delta \mathbf{r}(\ell) = r_{\ell}$ is transmitted at each level $\ell$. Since each chunk has size $b$, the total transmission cost to reach level $K$ is
\begin{equation}\label{eq:cost_pg}
\sum_{\ell=1}^{K} b = Kb.
\end{equation}

In contrast, in a non-progressive scheme, the representation corresponding to level $\ell$ must be transmitted in full at each step, with cost $\ell b$. Therefore, the total transmission cost is
\begin{equation}\label{eq:cost_npg}
\sum_{\ell=1}^{K} \ell b = b \sum_{\ell=1}^{K} \ell = \frac{K(K+1)}{2}b.
\end{equation}

Comparing Eq.~\eqref{eq:cost_pg} and Eq.~\eqref{eq:cost_npg}, we have
\begin{equation}
\frac{K(K+1)}{2}b > Kb, \quad \forall K > 1,
\end{equation}
which shows that the non-progressive scheme incurs strictly higher communication cost. 
Moreover, the total transmission cost of the progressive scheme scales linearly with the number of levels, i.e., $\mathcal{O}(K)$, whereas the non-progressive scheme scales quadratically, i.e., $\mathcal{O}(K^2)$.
\end{proof}

\subsection{Meta AutoEncoder}\label{sec:metaAE}
To realize the progressively refinable representation, we design a \emph{Meta AutoEncoder} with three components: ($i$) an encoder that maps the original latent into ordered transmission chunks, ($ii$) a prefix masking module that simulates partial transmission, and ($iii$) a decoder that reconstructs the latent while estimating its quality. We implement both encoder and decoder using Transformer encoder layers~\cite{vaswani2017attention}. As a result, the representation is naturally structured as a sequence of tokens, where each token corresponds to a transmission unit. In the following, we use the term \emph{token} instead of chunk for clarity and consistency with transformer architectures.

\vspace{1mm}
\noindent \textbf{Meta Encoder.} Given the visual latent $\mathbf{z} \in \mathbb{R}^{d}$, the encoder produces an ordered representation as defined in Eq.~\eqref{eq:enc} and \eqref{eq:latent},
where each token is arranged according to transmission priority. Earlier tokens capture coarse semantic information, while later token provide finer refinements.

\vspace{1mm}
\noindent \textbf{Prefix Masking Module.}
To simulate progressive transmission during training, a transmission level $\ell \in \{1, \dots, K\}$ is sampled uniformly at random. Only the prefix $\mathbf{r}(\ell) = \big[r_1, r_2, \dots, r_{\ell}\big]$ is kept, while the remaining tokens are replaced with \textit{zero padding tokens}. 
This strategy is related to nested dropout~\cite{rippel2014learning}, which learns ordered representations by stochastically truncating latent dimensions. In contrast, we apply prefix masking over transmission tokens to support progressively refinable communication.
This results in a masked representation $\tilde{\mathbf{r}}(\ell)$, which serves as input to the decoder. While uniform sampling provides a simple and unbiased training strategy, exploring optimal sampling distributions for different deployment scenarios is left for future work.
The theoretical justification for how prefix masking encourages progressively refinable representations is provided in Appendix~\ref{app:progressive-proof}.
Fig.~\ref{fig:mask} illustrates the masking process with a PyTorch implementation.
\begin{figure}[h]
    \centering
\includegraphics[width=0.6\columnwidth]{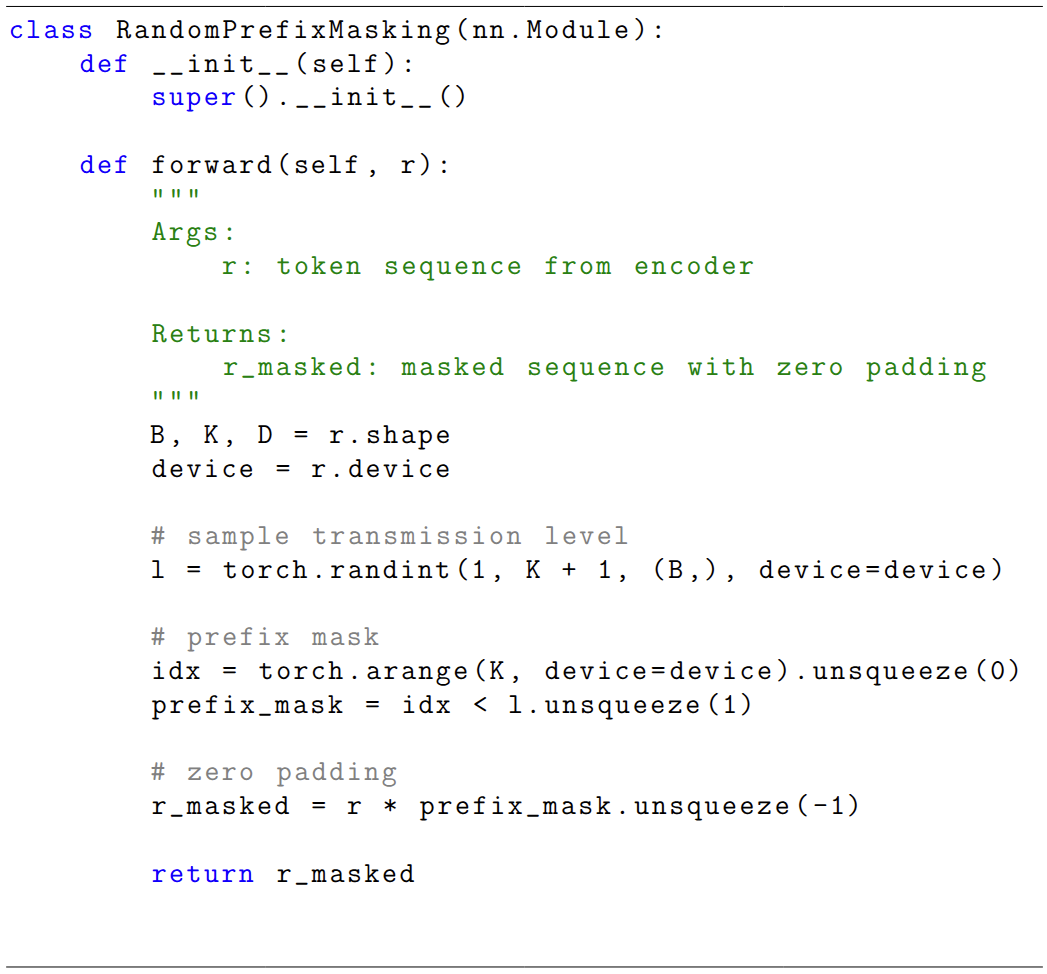}
    \caption{Example code of prefix masking module in PyTorch.}
    \label{fig:mask}
\end{figure}

\vspace{1mm}
\noindent \textbf{Meta Decoder with Semantic Quality Estimation.}
The decoder $f_{\mathrm{dec}}(\cdot)$ is implemented as a stack of Transformer encoder layers. In addition to the masked token sequence, a learnable \emph{error token} is prepended to the input sequence~\cite{devlin2019bert}. The decoder output is used for two tasks:
($i$) the token-wise outputs corresponding to the $K$ token positions are passed through a reconstruction head to predict the full latent, as described in Eq.~\eqref{eq:recon}.
Notably, the decoder always reconstructs the full latent $\mathbf{z}$, regardless of the transmission level $\ell$ (see Eq.~\eqref{eq:loss_z});
($ii$) the hidden state associated with the error token is fed into an auxiliary prediction head to estimate the reconstruction error:
\begin{equation}\label{eq:err_pred}
\hat{e}(\ell) = g_{\mathrm{err}}\!\left([f_{\mathrm{dec}}(\tilde{\mathbf{r}}(\ell))]_{\text{err}} \right),
\end{equation}
where $g_{\mathrm{err}}(\cdot)$ denotes the error prediction head, and $[\cdot]_{\text{err}}$ extracts the hidden state associated with the prepended error token from the decoder output sequence.

\vspace{1mm}
\noindent \textbf{Training Objective.}
The MetaAE is trained with two objectives. The latent reconstruction loss is defined as
\begin{equation}\label{eq:loss_z}
\mathcal{L}_{\mathrm{rec}} = \|\hat{\mathbf{z}}(\ell) - \mathbf{z}\|_2^2,
\end{equation}
and the ground-truth error at level $\ell$ is defined as
\begin{equation}
e(\ell) = \left\|\mathrm{sg}\!\left(\hat{\mathbf{z}}(\ell)\right) - \mathbf{z}\right\|_2^2,\end{equation}
where $\mathrm{sg}(\cdot)$ denotes the stop-gradient operator, and the error prediction loss is defined as
\begin{equation}
\mathcal{L}_{\mathrm{err}} = \|\hat{e}(\ell) - e(\ell)\|_2^2.
\end{equation}
For notational simplicity, $\hat{\mathbf{z}}(\ell)$ denotes the decoder output tokens excluding the error token.
The overall training objective is given by
\begin{equation}
\mathcal{L}_{\mathrm{total}} = \mathcal{L}_{\mathrm{rec}} + \lambda_{\mathrm{err}} \cdot \mathcal{L}_{\mathrm{err}},
\end{equation}
where $\lambda_{\mathrm{err}}$ balances reconstruction accuracy and quality estimation. In this work, $\lambda_{\mathrm{err}} = 1.0$ is chosen empirically.

\subsection{Semantic Quality-Aware Transmission Control}
Based on the Meta AutoEncoder, we define a semantic quality metric to guide adaptive transmission. In Eq.~\eqref{eq:err_pred}, the decoder predicts a reconstruction error $\hat{e}(\ell)$ for a given transmission level $\ell$. We map this value to a bounded quality score as
\begin{equation}\label{eq:quality_def}
q(\ell) = \exp\big(-c\cdot\hat{e}(\ell)\big), \qquad q(\ell) \in [0,1],
\end{equation}
where $c>0$ is a scaling hyperparameter. A higher $q(\ell)$ corresponds to a lower predicted reconstruction error and thus indicates a higher estimated reconstruction quality. The exponential mapping provides a monotonic transformation of the predicted reconstruction error into a normalized score that can be directly used for transmission control.

\vspace{1mm}
\noindent \textbf{Semantic-Aware Quality Metric.}
The proposed quality metric is inherently \emph{semantic-aware}. Specifically, the reconstruction target $\mathbf{z}$ is the latent representation extracted from a pretrained vision encoder with frozen weights. As such, $\mathbf{z}$ encodes high-level semantic information rather than low-level pixel details. Minimizing the reconstruction error in this latent space is therefore analogous to perceptual loss~\cite{johnson2016perceptual}, capturing semantic fidelity.
We further validate this design in Sec.~\ref{sec:results}, where we show that the proposed semantic quality $q(\ell)$ is highly correlated with downstream task performance (e.g., caption similarity), demonstrating that it serves as an effective surrogate for task-level performance in transmission control.

\vspace{1mm}
\noindent \textbf{Adaptive Transmission Control.}
The semantic quality score is used to drive a closed-loop transmission process between the edge and the cloud. Given a predefined quality threshold $\epsilon \in [0,1]$, the cloud evaluates $q(\ell)$ after each transmission step and determines whether additional information is required.

Specifically, starting from an initial transmission level $\ell_0$, the system iteratively performs:
\begin{enumerate}
    \item [1.]\textbf{Edge $\rightarrow$ Cloud:} transmit the next-level token;
    \item [2.]\textbf{Cloud:} reconstruct $\hat{\mathbf{z}}(\ell)$ and compute $q(\ell)$;
    \item [3.]\textbf{Decision:} if $q(\ell) \ge \epsilon$ \textbf{or} $\ell = K$, terminate; otherwise, request further transmission and return to Step~1.
\end{enumerate}

This process forms a feedback-driven control loop, enabling the system to adaptively balance communication cost and semantic fidelity. 
Fig.~\ref{fig:ctrl} illustrates the corresponding sequential diagram. 
In practice, the encoding process is performed only once at the initial iteration to generate the full ordered token sequence. The resulting tokens corresponding to all transmission levels are stored at the edge-side controller, and subsequent iterations simply transmit additional tokens from this cached representation without recomputing the encoder.
\begin{figure}[h]
    \centering
\includegraphics[width=0.55\columnwidth]{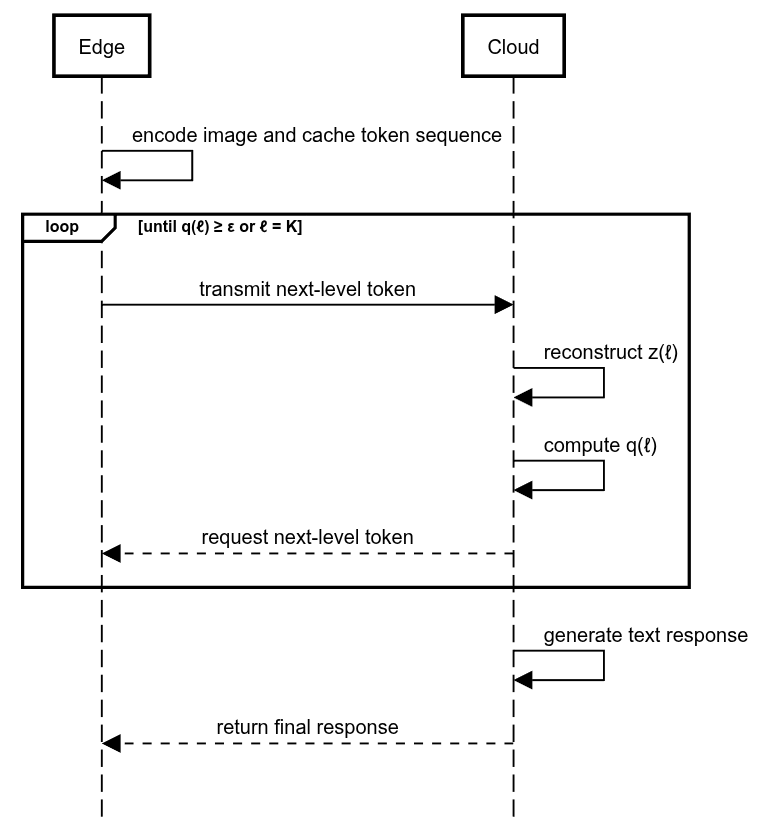}
    \caption{Progressive transmission with semantic quality-based control. The initial transmission of textual input is omitted in this diagram for simplicity.}
    \label{fig:ctrl}
\end{figure}

\section{Experimental Setup}\label{sec:exp}

\subsection{End-to-End Edge–Cloud Testbed}
\noindent \textbf{Edge.}
The edge device is built on the NXP i.MX95 application processor, which integrates a six-core ARM Cortex-A55 64-bit CPU and the Neutron NPU\footnote{https://www.nxp.com/products/i.MX95}, delivering up to 2 TOPS of inference throughput. The NPU supports hardware-accelerated execution of both CNN and transformer-based architectures.
The platform runs a Yocto Project-based Linux distribution with kernel version Linux kernel 6.12. Although commercially available i.MX95-based boards such as  IMX95-EVK was evaluated during the prototyping phase, we ultimately designed a custom SMARC\footnote{https://sget.org/standards/smarc/} system-on-module based on the i.MX95 platform. This allows us to fully leverage the favorable NPU power envelope and to incorporate additional interfaces such as CAN bus, RS-485 and Micron LPDDR5 memory, specifically targeting robotic platform development, where sustained low-power edge inference is a primary design constraint.

\vspace{1mm}
\noindent \textbf{Cloud.}
The cloud side is implemented on a server equipped with an Intel Core i7-10700K CPU, 32\,GB of RAM, and an NVIDIA RTX 2080 Super GPU. The system is developed using Python 3.11.9 and PyTorch 2.5.1. The RESTful API is built using Flask 3.0.3, which handles bidirectional communication between the edge and the cloud.
To emulate bandwidth-constrained network conditions, the cloud service is deployed within a Docker container, and traffic control is applied using the Linux \texttt{tc} utility with a token bucket filter (TBF). The configuration parameters are summarized in Table~\ref{tab:tc_config}.
\begin{table}[h]
\centering
\caption{Traffic control (tc) configuration for cloud side}
\label{tab:tc_config}
\begin{tabular}{|l|l|}
\hline
\textbf{Parameter} & \textbf{Value} \\
\hline
Interface & \texttt{<container ID>} \\
Queue discipline & \texttt{tbf} \\
Rate & \texttt{1 mbit} \\
Burst & \texttt{32 kbit} \\
Latency & \texttt{200 ms} \\
\hline
\end{tabular}
\end{table}

\vspace{1mm}
\noindent \textbf{VLM.}
To evaluate the proposed framework under resource-constrained settings, we adopt \emph{SmolVLM-256M}~\cite{marafioti2025smolvlm} with $4$-bit quantization\footnote{https://huggingface.co/HuggingFaceTB/SmolVLM-256M-Instruct} as the backbone VLM. This model is designed for lightweight deployment, with a total parameter count of $\mathord{\sim}256$ million.
The SmolVLM architecture follows a standard vision--language paradigm, consisting of a vision encoder that extracts visual features, and a language model that performs reasoning and generation. 
In our setup, the vision encoder is deployed on the edge device, while the language model is hosted on the cloud. 
During training, the VLM parameters are kept frozen, and only the MetaAE is optimized.

\vspace{1mm}
\noindent \textbf{MetaAE.}
The encoder consists of $2$ Transformer encoder layers with $8$ attention heads, hidden dimension of $576$, and an MLP dimension of $4\times576$. The decoder follows a similar design with $2$ Transformer encoder layers.
During training, progressive transmission is simulated by random prefix masking with four candidate transmission levels, corresponding to LTL ratios of $25$\%, $50$\%, $75$\%, and $100$\% of the full latent sequence. Tokens beyond the sampled prefix are zero-masked. 
We use a batch size of $128$ and train for $50$ epochs with the Adam optimizer and a learning rate of $1\times10^{-4}$. The model is trained on $50,000$ images from the COCO $2017$ training set~\cite{lin2014microsoft} without using annotations.
\subsection{Evaluation Metrics}
Due to the limited capacity of the SmolVLM-256M model, absolute task accuracy is not the main focus of our evaluation. Instead, we assess how well the proposed method preserves the \textit{Semantic Consistency} (SC) of the original model outputs under progressive transmission. In other words, the original model output is treated as the reference, and we measure the deviation introduced by the proposed framework.
We evaluate the method on two well-established benchmarks for VLMs:
\begin{itemize}
    \item COCO Captioning (MS COCO 2017)\cite{lin2014microsoft}
    \item POPE (Polling-based Object Probing Evaluation)~\cite{li2023evaluating}
\end{itemize}
For each dataset, we randomly sample $1,000$ instances from the test split, and conduct $10$ independent runs to account for stochasticity in generation. All reported results correspond to the average over these runs.
Moreover, SC is quantified using a sentence embedding tool from fastembed\footnote{https://github.com/qdrant/fastembed}. Specifically, we compute the \textit{cosine similarity} between the embedding of answers generated by the proposed method and by the original model. This metric captures the degree to which the semantic meaning is preserved, even if the exact wording differs.

In addition, we compare the following $3$ deployment configurations: ($i$) \textit{Full Edge}, where the entire VLM is executed on the edge; ($ii$) \textit{Full Cloud}, where all computation is done on a cloud server; and ($iii$) \textit{Edge-Cloud}, corresponding to the proposed progressive transmission framework that partitions computation between edge and cloud.
For a fair comparison, we do not apply any entropy-based compression to the transmitted features (or pixels) in all above configurations. More comprehensive evaluation details are provided in Appendix~\ref{app:extended_benchmarks}.

\section{Results and Discussion}\label{sec:results}
Fig.~\ref{fig:acc_delay} illustrates the trade-off between Semantic Consistency (SC) and transmission delay over different LTL ratios, with SC and delay plotted on the left and right y-axes, respectively. Starting from $100$\% LTL ratio, both benchmarks achieve their highest SC. As the LTL ratio gradually reduces, SC degrades gracefully, remaining above $80$\% even at aggressive compression levels (e.g., $25$\%), while the transmission delay decreases almost linearly.
Notably, at $25$\% LTL ratio, the proposed method achieves a substantial reduction in delay ($75$\%), while incurring less than $5$\% degradation in SC. 
Additionally, Fig.~\ref{fig:corr} shows a near-perfect monotonic relationship between the proposed quality metric defined in Eq.~\eqref{eq:quality_def} and COCO task SC, with a Spearman's Rank Correlation Coefficient (SRCC) of $1.0$. This indicates that the proposed quality metric serves as a reliable proxy for semantic accuracy, making it well-suited for guiding adaptive transmission decisions.
\begin{figure}[h]
    \centering
    \begin{subfigure}{0.32\textwidth}
        \centering
        \includegraphics[width=\textwidth, trim=0.6cm 0cm 0.6cm 0.4cm, clip]{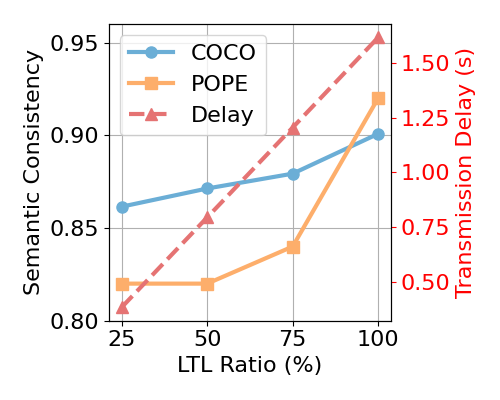}
        \caption{SC and delay curves.}
        \label{fig:acc_delay}
    \end{subfigure}
    \hspace*{2cm}
    \begin{subfigure}{0.25\textwidth}
        \centering
        \includegraphics[width=\textwidth, trim=0.6cm 0cm 0.6cm 0.4cm, clip]{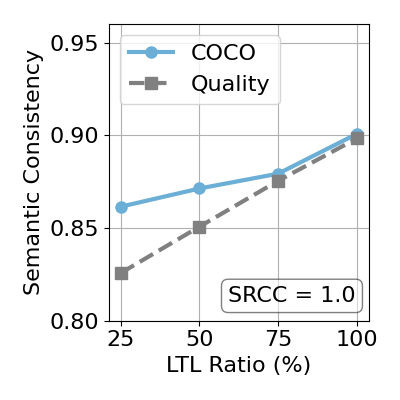}
        \caption{SC vs Quality.}
        \label{fig:corr}
    \end{subfigure}
    \caption{SC-delay trade-off and SC-quality correlation.}
\end{figure}
Table~\ref{tab:qualitative} shows qualitative results of image descriptions under different LTL ratios. For this representative real-world example, no noticeable semantic deviations are observed across levels: the main scene content is described correctly in all cases, while the wording varies with the LTL ratio. The image was taken by the authors and is therefore not included in the training/validation set, which further suggests robust generalization to Out-Of-Distribution (OOD), real-world data.
\begin{table}[ht]
\vspace*{0.04in}
\caption{Answers to the question "Describe the image in detail." under different LTL ratios.}\label{tab:qualitative}
\centering
\begin{tabular}{|>{\centering\arraybackslash}m{1.6cm}|
                >{\centering\arraybackslash}m{2.5cm}|
                >{\centering\arraybackslash}m{2.5cm}|
                >{\centering\arraybackslash}m{2.5cm}|
                >{\centering\arraybackslash}m{2.5cm}|
                >{\centering\arraybackslash}m{2.5cm}|}
\hline
Image & 25\% & 50\% & 75\% & 100\% & ground truth\\
\hline
\raisebox{-0.05\height}{\includegraphics[width=1.2cm]{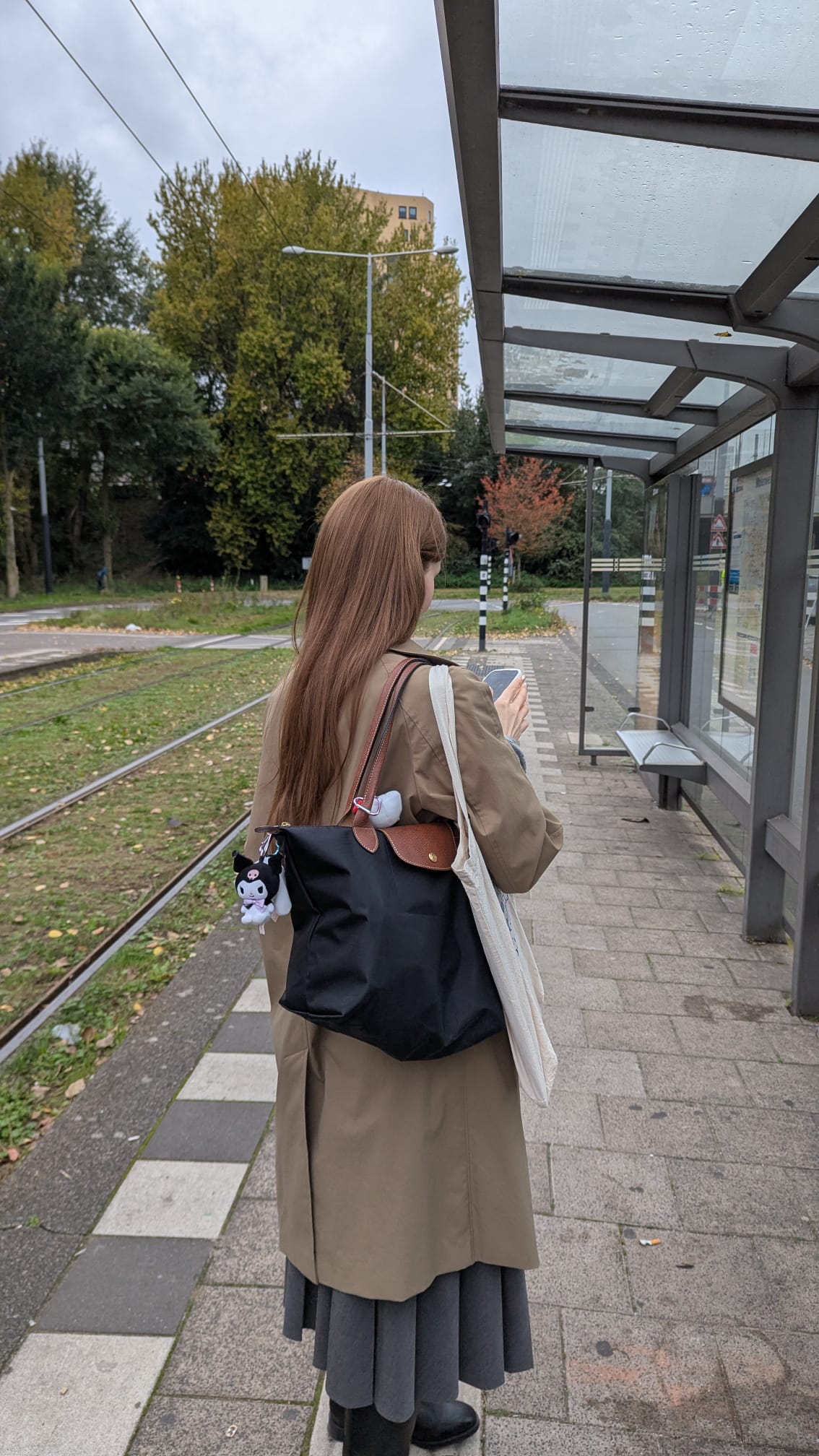}} & A person standing in front of a bus stop with a brown coat on the sidewalk in front of the bus stop... & A woman standing in front of a bus stop with a brown jacket on and a black bag with a leather strap... & A woman standing in front of a bus stop with a brown jacket on and a black bag with some red... & A woman standing in a bus stop, wearing a tan trench coat, brown skirt, and carrying a black handbag... & A woman standing in front of a bus stop, wearing a tan trench coat, brown skirt...\\
\hline
\raisebox{-0.05\height}{\includegraphics[width=1.2cm]{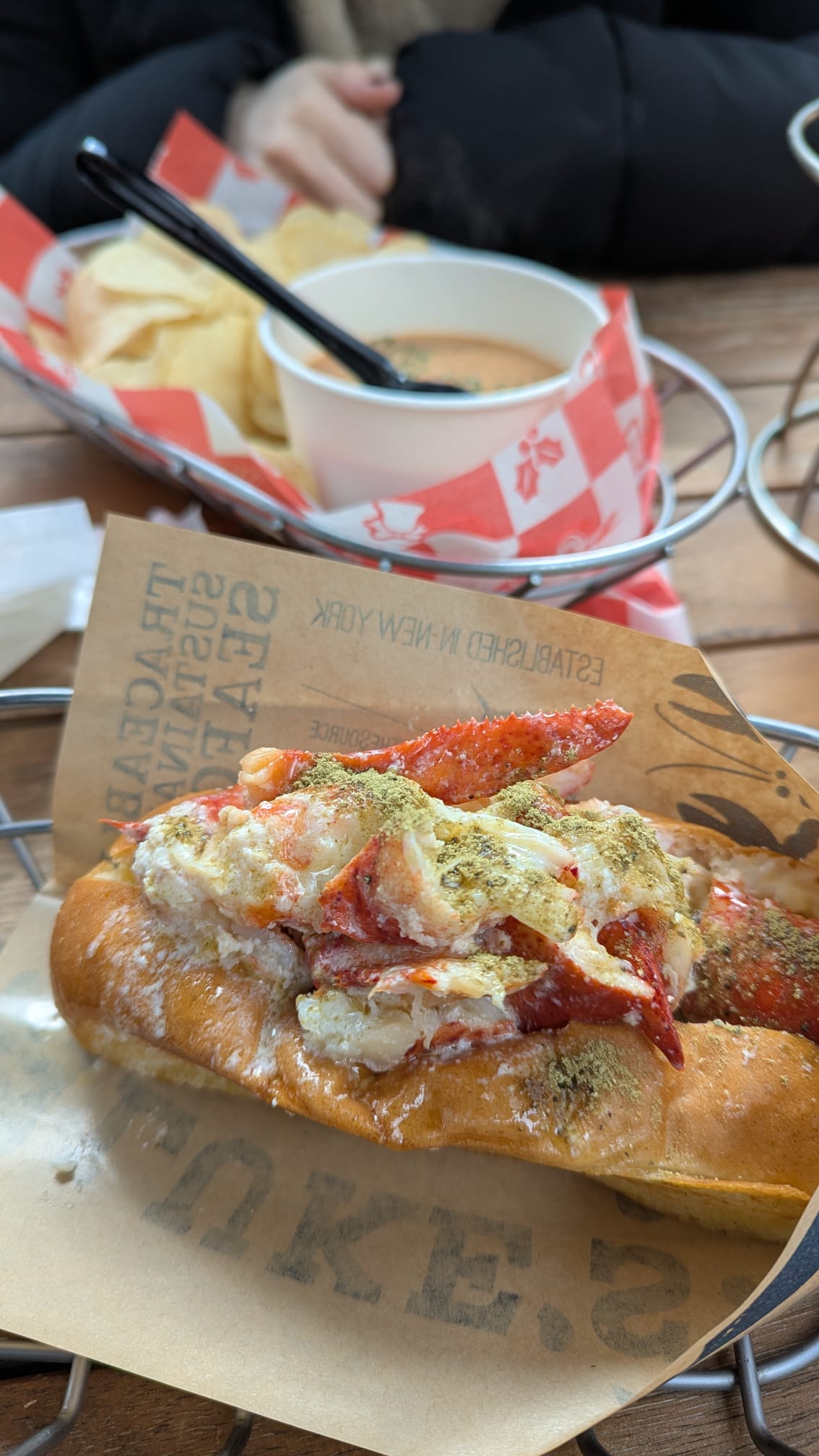}} & A sandwich sits on a table, in a basket of what looks like some kind of bread or pastry... & A sandwich sits on a table, in a basket of what looks like some kind of soup... & A close-up shot of a white and red mini-sauce sandwich, possibly a pizza or a meatball... & A small sandwich on a table, in the background there is a bowl of what looks like soup... & A sandwich sits on a table, in a basket of what looks like a soup...\\
\hline
\raisebox{-0.05\height}{\includegraphics[width=1.8cm]{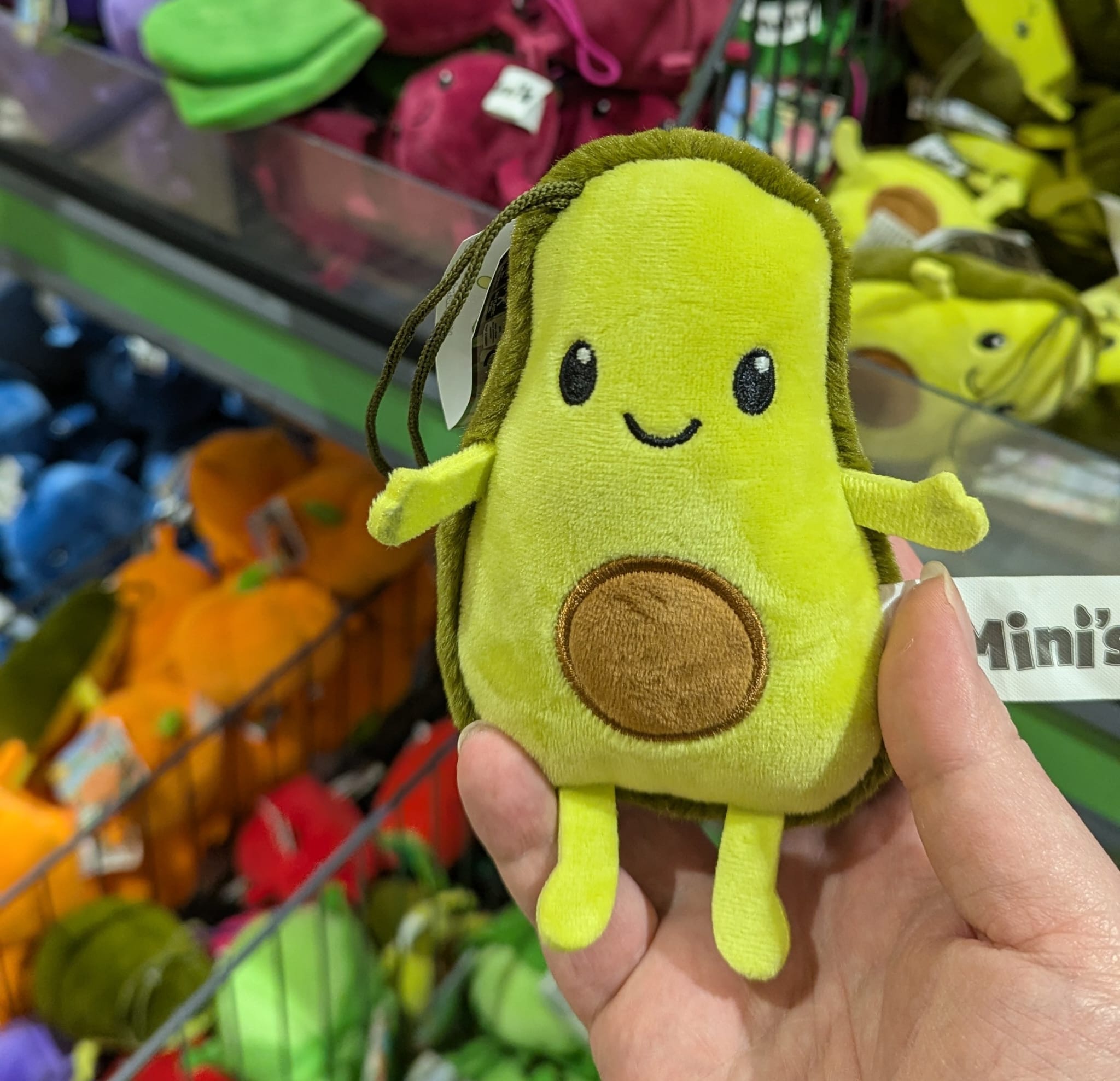}} & In the foreground, a person's hand is holding a plush toy with a small face and a small body... & In the foreground, a person's hand is holding a plush avocado-shaped stuffed animal... & In the foreground, a person's hand is holding a small yellow and brown stuffed animal... & In the foreground, a person is holding a small yellow and brown plush toy... & In the foreground, a person is holding a small yellow avocado with a smiling face...\\
\hline
\end{tabular}
\end{table}
As shown in Table~\ref{tab:compare}, the Full Edge configuration is computation-bound, while the Full Cloud setting is dominated by network delay. In contrast, the proposed Edge-Cloud approach achieves the lowest end-to-end delay by balancing computation and communication. Notably, the overhead of MetaAE is marginal in both delay and energy cost ($0.35$\%), while contributing to a sharp reduction in transmission delay.
\begin{table}[h]
\centering
\caption{System-Level Performance at $1$Mbps uplink. The Edge-Cloud setting is with $100$\% LTL ratio, indicating the worst case scenario for the proposed progressive scheme.}\label{tab:compare}
\label{tab:system_performance}
\begin{tabular}{|l|c|c|c|}
\hline
\textbf{Metric} & \textbf{Full Edge} & \textbf{Full Cloud} & \textbf{Edge--Cloud} \\
\hline
E2E Delay (s) & 7.98 & 9.32 & 6.94 \\
\hline
Vision Encode (s) & 3.20 (edge) & 0.05 (cloud) & 3.21 (edge) \\
Network Delay (s) & - & 8.75 & 1.62 \\
LLM Decode (s) & 3.46 (edge) & 0.49 (cloud) & 0.49 (cloud) \\
MetaAE Encode (s) & - & - & 0.01 (edge) \\
\hline
Peak Memory (MB) & 625 (edge) & 92 (edge) & 344 (edge) \\
NPU Energy Cost (J) & 23.31 & - & 11.27 \\
\hline
\end{tabular}
\end{table}

\section{Conclusion}\label{sec:conclude}
In this paper, we proposed a progressive semantic communication framework for efficient edge–cloud deployment of VLMs. By introducing a plug-and-play MetaAE with progressively refinable latent representations, the system enables partial information reuse and significantly improving communication efficiency. The integration of a semantic quality-aware transmission control mechanism further provides adaptive and fine-grained control over the trade-off between communication cost and semantic fidelity.
Through a real-world end-to-end testbed on an embedded edge platform and a cloud server, we demonstrated that the proposed approach achieves substantial reductions in transmission latency ($\mathord{\sim}75$\%) while maintaining high semantic consistency ($\mathord{>}80$\%), under constrained bandwidth conditions. These results highlight the effectiveness of progressive semantic communication as a key enabler for deploying VLMs in resource-limited edge–cloud scenarios.
Future work will evaluate broader VLMs under dynamic network conditions, while complementing semantic consistency with direct task-level accuracy metrics and comparisons against alternative compression and transmission schemes.

\section*{Acknowledgment}
This research was partially funded by the Dutch 6G flagship project Future Network Services (FNS).

\newpage
\bibliographystyle{IEEEtran}
\bibliography{biblio}

\newpage
\appendices
\input{appendix_}
\newpage
\input{appendix}

\end{document}

%% file: appendix_.tex
\section{Analysis of Progressive Refinability}
\label{app:progressive-proof}
We now show why the proposed prefix-masked
training strategy encourages progressively refinable latent representations.
Although the coarse-to-fine structure is not imposed by explicit supervision,
prefix masking introduces a structured optimization bias over the ordered token sequence. Specifically, earlier tokens are visible under more transmission
levels and therefore participate in more reconstruction objectives than later
tokens. Proposition~\ref{prop:asymmetric-pressure} formalizes this asymmetric optimization pressure and shows
that prefix masking naturally prioritizes earlier tokens during training.
Beyond this optimization bias, progressive refinement is also supported by the
nested structure of prefix transmission. A higher transmission level contains
all tokens available at lower levels, together with additional refinement tokens.
Therefore, under sufficient decoder capacity, the best achievable reconstruction
error cannot increase as the prefix length grows. Proposition~\ref{prop:nested} formalizes this
nested-prefix argument and establishes that the optimal reconstruction risk is
non-increasing with the transmission level. Together, these results explain why
prefix-masked training encourages early tokens to encode broadly useful coarse
information, while later tokens provide incremental refinements.

\begin{proposition}[Prefix masking induces asymmetric optimization pressure]
\label{prop:asymmetric-pressure}
Let $z \in \mathbb{R}^{d}$ denote the target latent representation and let
$r=f_{\mathrm{enc}}(z)=(r_1,\ldots,r_K)$ be the ordered token sequence produced
by the meta encoder. Let $P_\ell(r)$ denote the prefix-masking operator that
keeps the first $\ell$ tokens and masks the remaining tokens:
\[
P_\ell(r)=(r_1,\ldots,r_\ell,0,\ldots,0).
\]
Consider the prefix-masked reconstruction objective
\[
\mathcal{L}_{\mathrm{pm}}
=
\frac{1}{K}
\sum_{\ell=1}^{K}
\mathcal{L}_{\ell},
\qquad
\mathcal{L}_{\ell}
=
\left\|
f_{\mathrm{dec}}(P_\ell(r))-z
\right\|_2^2 .
\]
Then token $r_i$ contributes only to the reconstruction losses whose
transmission levels satisfy $\ell \ge i$. Consequently, token $r_i$ receives
optimization signals from exactly $K-i+1$ prefix levels. Hence, earlier tokens
receive reconstruction gradients from more prefix levels than later tokens,
which induces an asymmetric optimization pressure over token positions.
\end{proposition}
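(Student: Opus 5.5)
The plan is to make precise what ``token $r_i$ contributes to $\mathcal{L}_\ell$'' means. I would interpret it as $\mathcal{L}_\ell$ depending on $r_i$ through the decoder input, measured by the partial gradient $\partial \mathcal{L}_\ell/\partial r_i$. The statement then follows from the structure of $P_\ell$ alone, with no assumptions on $f_{\mathrm{dec}}$ beyond differentiability.

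\textbf{Step 1: the masking operator.} For fixed $\ell$, the $j$-th output block of $P_\ell(r)$ is $r_j$ if $j\le \ell$ and $0$ otherwise. Hence the Jacobian of $P_\ell$ with respect to $r_i$ is the block-selection matrix $E_i$ when $i\le\ell$, and the zero matrix when $i>\ell$. Equivalently, $\mathbb{1}[i\le \ell]$ multiplies the identity on the $i$-th block.

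\textbf{Step 2: the chain rule.} Write $u_\ell = P_\ell(r)$. Then
\[
\frac{\partial \mathcal{L}_\ell}{\partial r_i}
= \mathbb{1}[i\le \ell]\, \left(\frac{\partial u_\ell}{\partial r_i}\right)^{\!\top} \nabla_{u}\, \mathcal{L}_\ell ,
\qquad
\nabla_{u}\mathcal{L}_\ell = 2\, J_{f_{\mathrm{dec}}}(u_\ell)^{\top}\bigl(f_{\mathrm{dec}}(u_\ell)-z\bigr).
\]
So for $\ell<i$, $\mathcal{L}_\ell$ is constant in $r_i$, and its gradient vanishes identically. The same holds for gradients passed back to the encoder parameters through $r_i$: for $\ell<i$, the path from $\mathcal{L}_\ell$ through the $i$-th token is severed. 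I would remark that $\mathcal{L}_\ell$ may still depend on encoder parameters through the earlier tokens $r_1,\dots,r_\ell$, but not via $r_i$.

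\textbf{Step 3: counting.} Summing over the objective gives
\[
\frac{\partial \mathcal{L}_{\mathrm{pm}}}{\partial r_i}=\frac1K\sum_{\ell=i}^{K}\frac{\partial\mathcal{L}_\ell}{\partial r_i}.
\]
This sum has exactly $|\{\ell: i\le \ell\le K\}| = K-i+1$ possibly nonzero terms. Because $K-i+1$ is strictly decreasing in $i$, earlier tokens receive signals from strictly more levels, which is the claimed asymmetry. The same count holds in expectation under the uniform level sampling used in training: $\Pr[\ell\ge i]=(K-i+1)/K$.

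\textbf{Main obstacle.} The obstacle is interpretive rather than technical. The phrase ``receives optimization signals from exactly $K-i+1$ levels'' must be read as ``at most $K-i+1$ levels give nonzero gradients, and the others give identically zero ones.'' A given term with $\ell\ge i$ could vanish accidentally, for example at a perfect reconstruction where $f_{\mathrm{dec}}(u_\ell)=z$. I would therefore state the result as counting the levels whose losses \emph{functionally depend} on $r_i$. I would also note that this count is exact and independent of the decoder, while any claim about gradient \emph{magnitudes} is not asserted.
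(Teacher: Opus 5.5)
Your proposal is correct and follows the paper's proof essentially step for step: the mask makes $\mathcal{L}_\ell$ independent of $r_i$ whenever $\ell<i$, so $\nabla_{r_i}\mathcal{L}_{\mathrm{pm}}=\frac{1}{K}\sum_{\ell=i}^{K}\nabla_{r_i}\mathcal{L}_\ell$ collects contributions from the $K-i+1$ levels $\ell=i,\ldots,K$, and your explicit Jacobian of $P_\ell$ and the remark $\Pr[\ell\ge i]=(K-i+1)/K$ are added detail. One small correction to your interpretive aside: the count that is exact and decoder-independent is the number of levels at which $r_i$ survives the mask and enters $P_\ell(r)$, not the number of levels whose \emph{loss} functionally depends on $r_i$, because a decoder that ignores position $i$ would make that second count zero.
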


\begin{proof}
For a fixed transmission level $\ell$, the prefix-masking operator $P_\ell$
keeps token $r_i$ if and only if $i \le \ell$. If $i>\ell$, then $r_i$ is
replaced by a zero padding token and does not affect the decoder input
$P_\ell(r)$. Therefore, the level-$\ell$ reconstruction loss
\[
\mathcal{L}_{\ell}
=
\left\|
f_{\mathrm{dec}}(P_\ell(r))-z
\right\|_2^2
\]
does not depend on $r_i$ whenever $\ell<i$. Hence,
\[
\nabla_{r_i}\mathcal{L}_{\ell}=0,
\qquad
\text{for all } \ell<i .
\]

Taking the gradient of the total prefix-masked objective with respect to
$r_i$ gives
\[
\nabla_{r_i}\mathcal{L}_{\mathrm{pm}}
=
\nabla_{r_i}
\left(
\frac{1}{K}
\sum_{\ell=1}^{K}
\mathcal{L}_{\ell}
\right)
=
\frac{1}{K}
\sum_{\ell=1}^{K}
\nabla_{r_i}\mathcal{L}_{\ell}.
\]
Using $\nabla_{r_i}\mathcal{L}_{\ell}=0$ for all $\ell<i$, we obtain
\[
\nabla_{r_i}\mathcal{L}_{\mathrm{pm}}
=
\frac{1}{K}
\sum_{\ell=i}^{K}
\nabla_{r_i}\mathcal{L}_{\ell}.
\]
Thus token $r_i$ receives reconstruction gradients only from the losses
corresponding to levels $\ell=i,i+1,\ldots,K$, i.e., from exactly $K-i+1$
prefix levels.

For two token indices $i<j$, we have
\[
K-i+1 > K-j+1.
\]
Therefore, earlier tokens are involved in more reconstruction objectives than
later tokens. This proves that prefix masking induces an asymmetric optimization
pressure over the ordered token sequence.
\end{proof}

\begin{proposition}[Nested prefixes encourage progressive refinement]
\label{prop:nested}
Let the level-$\ell$ reconstruction risk be defined as
\[
\mathcal{R}_{\ell}
=
\mathbb{E}_{z}
\left[
\left\|
f_{\mathrm{dec}}(P_\ell(f_{\mathrm{enc}}(z)))-z
\right\|_2^2
\right],
\]
where $P_\ell$ keeps the first $\ell$ tokens and masks the remaining tokens.
Let $\mathcal{R}_{\ell}^{\star}$ denote the minimum achievable reconstruction
risk at transmission level $\ell$ over the admissible encoder and decoder
function classes. If the decoder at level $\ell+1$ is sufficiently expressive
to ignore the additional token $r_{\ell+1}$, then the optimal achievable
reconstruction risk is non-increasing with the transmission level:
\[
\mathcal{R}_{\ell+1}^{\star}
\le
\mathcal{R}_{\ell}^{\star},
\qquad
\ell=1,\ldots,K-1.
\]
Therefore, increasing the prefix length can only improve, or at least preserve,
the best achievable reconstruction fidelity.
\end{proposition}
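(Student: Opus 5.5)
The plan is a standard inclusion-of-feasible-sets argument. Write $\mathcal{E}$ and $\mathcal{D}$ for the admissible encoder and decoder classes. The risk at level $\ell$ is
\[
\mathcal{R}_\ell(f_{\mathrm{enc}},f_{\mathrm{dec}})=\mathbb{E}_z\big[\|f_{\mathrm{dec}}(P_\ell(f_{\mathrm{enc}}(z)))-z\|_2^2\big],
\]
and its optimum is $\mathcal{R}_\ell^\star=\inf_{\mathcal{E}\times\mathcal{D}}\mathcal{R}_\ell$. It suffices to show that every pair $(f_{\mathrm{enc}},f_{\mathrm{dec}})$ achieving some risk at level $\ell$ can be converted into a pair achieving the same risk at level $\ell+1$. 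Taking infima then gives $\mathcal{R}_{\ell+1}^\star\le\mathcal{R}_\ell^\star$.

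\textbf{Step 1: the two masking operators are related by a fixed map.} Observe that $P_\ell\circ P_{\ell+1}=P_\ell$. Indeed, applying $P_{\ell+1}$ keeps $r_1,\dots,r_{\ell+1}$, and then zeroing position $\ell+1$ gives exactly $P_\ell(r)$. So the level-$\ell$ decoder input is a fixed function of the level-$(\ell+1)$ decoder input, namely $P_\ell$ applied to it.

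\textbf{Step 2: build a level-$(\ell+1)$ decoder that ignores the extra token.} Fix any pair $(f_{\mathrm{enc}},f_{\mathrm{dec}})$. Keep the same encoder and define $f'_{\mathrm{dec}}=f_{\mathrm{dec}}\circ P_\ell$. The hypothesis that ``the decoder at level $\ell+1$ is sufficiently expressive to ignore $r_{\ell+1}$'' is read precisely as $f'_{\mathrm{dec}}\in\mathcal{D}$; this is the only place the hypothesis is used. For a Transformer decoder this is plausible, since zeroing the token at position $\ell+1$ amounts to masking one attention position. For the formal statement, however, we simply take it as the stated closure assumption.

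\textbf{Step 3: compare risks pointwise and take infima.} By Step 1, for every $z$,
\[
f'_{\mathrm{dec}}(P_{\ell+1}(f_{\mathrm{enc}}(z)))=f_{\mathrm{dec}}(P_\ell(f_{\mathrm{enc}}(z))),
\]
so the squared errors coincide pointwise. Taking expectations gives $\mathcal{R}_{\ell+1}(f_{\mathrm{enc}},f'_{\mathrm{dec}})=\mathcal{R}_\ell(f_{\mathrm{enc}},f_{\mathrm{dec}})$. Since $\mathcal{R}^\star_{\ell+1}$ is an infimum over a set that contains $(f_{\mathrm{enc}},f'_{\mathrm{dec}})$, we get $\mathcal{R}^\star_{\ell+1}\le\mathcal{R}_\ell(f_{\mathrm{enc}},f_{\mathrm{dec}})$ for every pair. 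Taking the infimum over all pairs yields $\mathcal{R}^\star_{\ell+1}\le\mathcal{R}^\star_\ell$. No attainment of minima is needed, because the argument works with infima or, equivalently, with $\varepsilon$-optimal pairs. Applying this for $\ell=1,\dots,K-1$ gives the monotone chain.

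\textbf{Expected obstacle: making the expressiveness hypothesis precise.} In particular, one must address whether the encoder and decoder are optimized separately per level or shared across levels. If parameters are shared across levels, as in the joint objective $\mathcal{L}_{\mathrm{pm}}$, the conclusion may fail for the jointly trained model. I would therefore state explicitly that $\mathcal{R}^\star_\ell$ is the per-level optimum and that $\mathcal{D}$ is closed under precomposition with $P_\ell$. Under those conventions the argument above is complete.
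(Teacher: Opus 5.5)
Your proposal is correct and takes the same route as the paper: the paper also argues that any level-$\ell$ reconstruction map can be reproduced at level $\ell+1$ by ignoring $r_{\ell+1}$. Hence the achievable class at level $\ell$ is contained in the one at level $\ell+1$, and the infimum can only decrease. Your explicit construction $f_{\mathrm{dec}}\circ P_\ell$ via $P_\ell\circ P_{\ell+1}=P_\ell$, and your note that $\mathcal{R}^\star_\ell$ must be read as a per-level optimum, make precise what the paper leaves implicit.
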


\begin{proof}
At transmission level $\ell$, the decoder observes the prefix-masked
representation
\[
P_\ell(r)
=
(r_1,\ldots,r_\ell,0,\ldots,0).
\]
At transmission level $\ell+1$, the decoder observes
\[
P_{\ell+1}(r)
=
(r_1,\ldots,r_\ell,r_{\ell+1},0,\ldots,0).
\]
Thus, the information available at level $\ell$ is contained in the information
available at level $\ell+1$. The level-$(\ell+1)$ representation contains all
tokens used at level $\ell$, together with one additional token $r_{\ell+1}$.

Let $h_\ell$ denote an arbitrary reconstruction mapping that uses the level-$\ell$
prefix:
\[
\hat z^{(\ell)} = h_\ell(P_\ell(r)).
\]
Because $P_{\ell+1}(r)$ contains $P_\ell(r)$ as a subset of its unmasked tokens,
a decoder operating at level $\ell+1$ can reproduce the same reconstruction
mapping by ignoring the additional token $r_{\ell+1}$. That is, there exists a
mapping $h_{\ell+1}$ such that
\[
h_{\ell+1}(P_{\ell+1}(r))
=
h_\ell(P_\ell(r)).
\]
Therefore, any reconstruction function achievable at level $\ell$ is also
achievable at level $\ell+1$. Hence, the admissible reconstruction function
class at level $\ell$ is contained in the admissible reconstruction function
class at level $\ell+1$.

Since minimizing over a larger function class cannot yield a higher minimum
risk, we have
\[
\mathcal{R}_{\ell+1}^{\star}
=
\inf_{h_{\ell+1}}
\mathbb{E}_{z}
\left[
\left\|
h_{\ell+1}(P_{\ell+1}(r))-z
\right\|_2^2
\right]
\le
\inf_{h_{\ell}}
\mathbb{E}_{z}
\left[
\left\|
h_{\ell}(P_{\ell}(r))-z
\right\|_2^2
\right]
=
\mathcal{R}_{\ell}^{\star}.
\]
Thus,
\[
\mathcal{R}_{\ell+1}^{\star}
\le
\mathcal{R}_{\ell}^{\star},
\qquad
\ell=1,\ldots,K-1.
\]
This proves that the optimal achievable reconstruction error is non-increasing
as more prefix tokens are made available. Consequently, nested prefix
transmission encourages progressive refinement of the reconstruction.
\end{proof}

%% file: appendix.tex
%
%




\section{Extended Measurements and Per-Run Benchmarks}
\label{app:extended_benchmarks}
This appendix reports the complete measurement record underlying
Table~III of the main text.
Ten independent runs were performed for each of three deployment
configurations---Full Edge, Full Cloud, and the proposed
Edge--Cloud scheme---on the end-to-end testbed described in
Section~IV-A of the main text.
All cloud-side experiments are carried out under a 1-Mbps throttled
uplink emulated with the Linux \texttt{tc} token-bucket filter whose
parameters are listed in Table~I of the main text.
On-board power is measured with a GW INSTEK SPS-2415 programmable
DC supply at 11.7\,V, sampled frame-by-frame from a video recording
of the front panel during a full inference cycle.
Wall-clock latencies are recorded at the client and the server using
synchronized monotonic clocks;
\emph{server\_recv\_ms} denotes the time from the start of the
HTTP POST at the edge to the moment the server finishes reading
the request body.
Peak resident set size (RSS) is obtained via
\texttt{/proc/[pid]/status} sampled at 10\,Hz.
The implementation code will be released upon publication at \url{https://github.com/open-ep/ProSemComVLM}.

\subsection{Full-Edge Configuration}

In the Full-Edge configuration, the entire SmolVLM-256M pipeline
(vision encoder, LLM prefill, LLM decode) runs locally on the
i.MX95 SMARC module, with no network dependency.
The Neutron NPU executes the quantized vision encoder
(\texttt{vision\_encoder\_q4.onnx}), and the ONNX-exported LLM
decoder runs on the board through ONNX Runtime.
Each run generates 120 tokens for the input image of size $512\times512\times3$.

\begin{table}[h]
\centering
\caption{Per-Run Latency Breakdown for the Full-Edge Configuration}
\label{tab:full_edge}
\begin{tabular}{ccccccc}
\toprule
Run &
\begin{tabular}{c}Vision\\Encode (ms)\end{tabular} &
\begin{tabular}{c}LLM\\Prefill (ms)\end{tabular} &
TTFT (ms) &
\begin{tabular}{c}LLM\\Decode (ms)\end{tabular} &
\begin{tabular}{c}Total\\Generate (ms)\end{tabular} &
\begin{tabular}{c}E2E\\tok/s\end{tabular} \\
\midrule
1  & 3188.52 & 1322.39 & 4510.91 & 3416.58 & 7927.49 & 15.14 \\
2  & 3196.16 & 1333.97 & 4530.13 & 3412.34 & 7942.47 & 15.11 \\
3  & 3196.69 & 1313.17 & 4509.86 & 3495.68 & 8005.54 & 14.99 \\
4  & 3209.57 & 1310.73 & 4520.30 & 3463.95 & 7984.25 & 15.03 \\
5  & 3205.05 & 1317.80 & 4522.85 & 3403.12 & 7925.97 & 15.14 \\
6  & 3202.93 & 1326.81 & 4529.74 & 3633.72 & 8163.46 & 14.70 \\
7  & 3208.64 & 1318.79 & 4527.43 & 3540.80 & 8068.23 & 14.87 \\
8  & 3207.00 & 1314.58 & 4521.58 & 3351.90 & 7873.48 & 15.24 \\
9  & 3213.13 & 1316.15 & 4529.28 & 3496.96 & 8026.24 & 14.95 \\
10 & 3199.68 & 1313.74 & 4513.42 & 3389.38 & 7902.80 & 15.18 \\
\midrule
\textbf{Mean} &
  \textbf{3202.7} & \textbf{1318.8} & \textbf{4521.6} &
  \textbf{3460.4} & \textbf{7982.0} & \textbf{15.04} \\
\textbf{Std.} &
  $\pm$7.1 & $\pm$7.4 & $\pm$7.4 &
  $\pm$83.8 & $\pm$82.8 & $\pm$0.16 \\
\bottomrule
\end{tabular}
\end{table}

Vision encoding is remarkably deterministic on the Neutron NPU
(coefficient of variation, CV\,=\,0.22\%);
LLM prefill, which operates on a fixed-size visual token tensor,
is similarly stable (CV\,=\,0.56\%).
The dominant source of run-to-run variability is the auto-regressive
decode phase ($\pm$83.8\,ms), which is sensitive to on-board memory
pressure and scheduler state.
Time-to-first-token (TTFT), the user-perceived latency before any
output is produced, averages 4521.6\,ms; full 120-token delivery
takes 7982.0\,ms.
Peak RSS reaches 624.6\,MB, of which approximately 388\,MB is
attributable to the persistently loaded vision encoder and LLM
weights, and 156\,MB to the KV cache and intermediate activations
generated during a 120-token decode.
Fig.~\ref{fig:power_full_edge} shows the measured on-board power
profile during a Full-Edge cycle.
\begin{figure}[!h]
    \centering
\includegraphics[width=1.0\columnwidth]{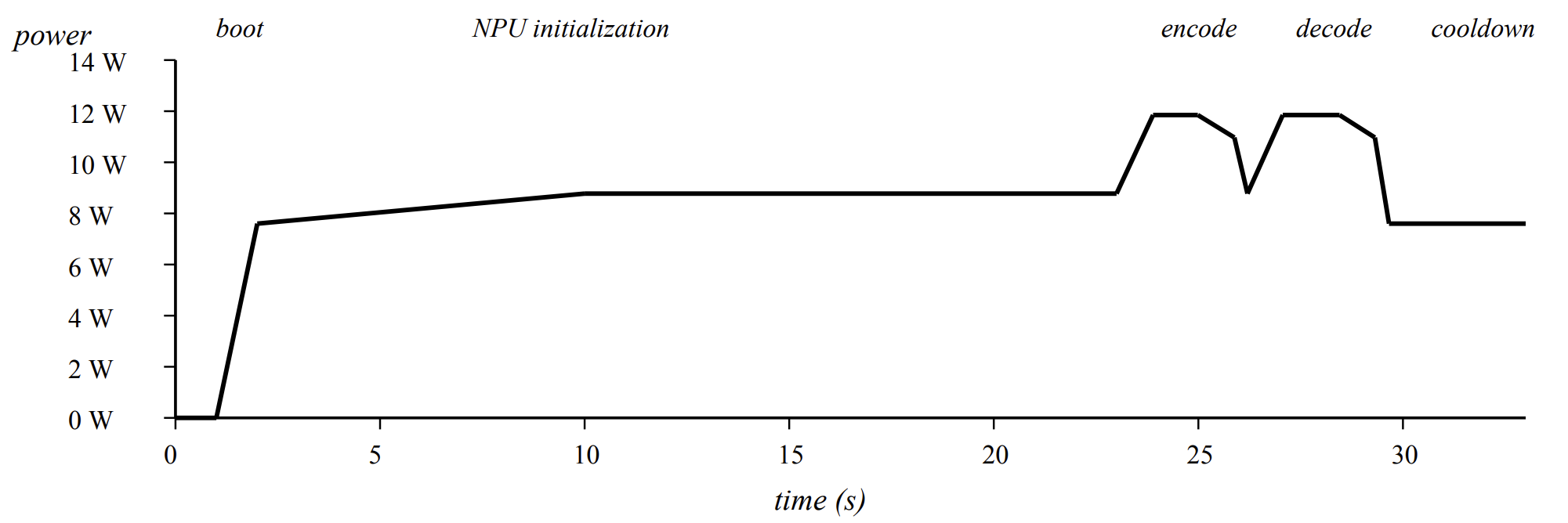}
    \caption{Measured on-board power during a single Full-Edge
    inference cycle, using the same supply, sampling, and power
    levels as Fig.~\ref{fig:power_edge_cloud}.
    Two consecutive compute bursts are visible, each sharing the
    same envelope shape (rise--hold--decay--drop) as the single burst
    in Fig.~\ref{fig:power_edge_cloud}: a first burst peaking at
    11.85\,W for the NPU vision encode (envelope width 3.20\,s,
    matching Table~\ref{tab:full_edge}) and a second slightly longer
    burst at the same peak for the auto-regressive LLM decode
    (envelope width 3.46\,s, matching Table~\ref{tab:full_edge}).
    The transition between the two bursts is a momentary drop to the
    8.775-W initialization baseline rather than a sustained plateau.
    Subtracting the 7.6-W idle baseline, the total NPU-attributable
    energy obtained by integrating $(P(t)-P_\mathrm{idle})$ over the
    two bursts is 23.31\,J.}
    \label{fig:power_full_edge}
\end{figure}
\subsection{Full-Cloud Configuration at 1\,Mbps}

In the Full-Cloud configuration, the edge performs no inference
and forwards the raw $512\times512\times3$ \texttt{uint8} image
to the server as a base64-encoded JSON payload of 1024.3\,KiB.
Both the vision encoder (\emph{Stage~1}) and the LLM decoder
(\emph{Stage~2}) then execute on the GPU server.
\begin{table}[h]
\centering
\caption{Per-Run Latency Breakdown at 1\,Mbps for the Full-Cloud Configuration (Pixels)}
\label{tab:full_cloud}
\begin{tabular}{cccccc}
\toprule
Run &
\begin{tabular}{c}server\_recv\\(ms)\end{tabular} &
\begin{tabular}{c}Stage 1\\(ms)\end{tabular} &
\begin{tabular}{c}Stage 2\\(ms)\end{tabular} &
\begin{tabular}{c}E2E\\(ms)\end{tabular} \\
\midrule
1  & 8748.0 &  61.8 & 511.6 & 9345.4 \\
2  & 8746.8 &  47.3 & 503.4 & 9335.4 \\
3  & 8748.4 &  49.8 & 505.5 & 9338.8 \\
4  & 8745.5 &  45.6 & 533.0 & 9354.2 \\
5  & 8766.5 &  46.1 & 462.0 & 9293.2 \\
6  & 8746.7 &  45.6 & 461.3 & 9265.9 \\
7  & 8787.0 &  45.6 & 464.4 & 9314.8 \\
8  & 8746.7 &  45.6 & 462.1 & 9273.8 \\
9  & 8751.2 &  45.7 & 505.3 & 9331.2 \\
10 & 8754.1 &  46.8 & 525.4 & 9361.3 \\
\midrule
\textbf{Mean} & \textbf{8754.1} & \textbf{48.0} &
  \textbf{493.4} & \textbf{9321.4} \\
\textbf{Std.} & $\pm$11.4 & $\pm$5.0 &
  $\pm$26.7 & $\pm$31.7 \\
\bottomrule
\end{tabular}
\end{table}
At 1\,Mbps, the theoretical transmission time of a 1024.3-KiB
payload is approximately 8395\,ms.
The measured \emph{server\_recv\_ms} mean of 8754\,ms
(CV\,=\,0.13\%) is within 4.3\% of this bound once HTTP framing,
TCP slow start, and the TBF burst/latency parameters are
accounted for.
Network transfer therefore accounts for 93.9\% of the
end-to-end latency; the combined server-side vision and language
computation (541\,ms) contributes less than 6\%.
Peak on-board RSS remains at 92.2\,MB since no model is loaded
locally; the 6-MB increment over the 86-MB baseline is the pixel
buffer (0.75\,MB), the base64-encoded payload ($\sim$1.3\,MB),
and the HTTP response buffer.

\subsection{Edge--Cloud Configuration at 1\,Mbps}

In the proposed Edge--Cloud configuration, the Neutron NPU
executes \texttt{vision\_encoder\_q4.onnx} on the board and
emits a latent tensor of shape $(64, 576)$ in \texttt{float32},
serialized as a 192.3-KiB JSON payload.
The server receives the latent, performs the LLM decode, and
returns the text response.
The full progressive transmission pipeline with the Meta
AutoEncoder sits between the vision encoder and the HTTP layer;
the measurements reported here correspond to transmission at
100\% latent transmission level (LTL ratio).
Fig.~\ref{fig:power_edge_cloud} shows the measured on-board
power profile during an Edge--Cloud cycle.
\begin{figure}[h]
    \centering
\includegraphics[width=1.0\columnwidth]{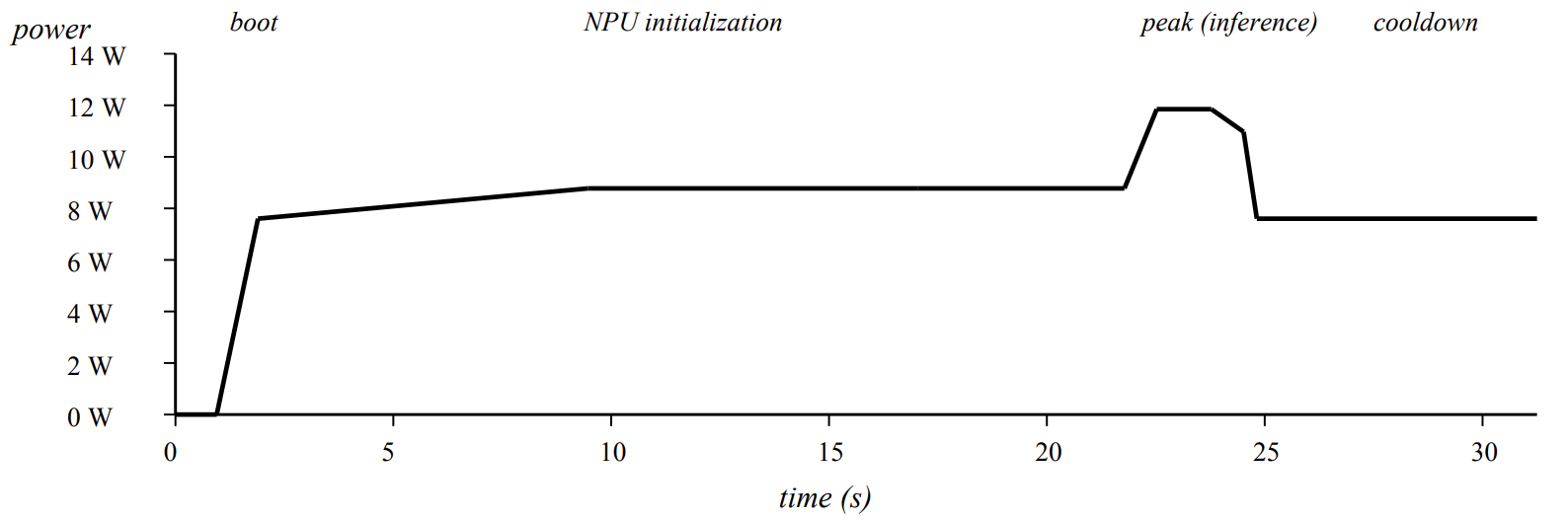}
    \caption{Measured on-board power during a single Edge--Cloud
    inference cycle (supply voltage 11.7\,V, sampled
    frame-by-frame from the DC power supply panel).
    Four operating regimes are visible: boot (rising from 0 to
    7.6\,W), NPU initialization (holding at 8.775\,W), a peak
    inference burst reaching 11.85\,W as the latent is transmitted
    to the cloud and decaying to 10.97\,W (envelope width 3.21\,s,
    matching the mean NPU encoding time of
    Table~\ref{tab:edge_cloud}), and a cooldown back to 7.6\,W.
    Subtracting the 7.6-W idle baseline, the NPU-attributable
    energy obtained by integrating $(P(t)-P_\mathrm{idle})$ over
    the burst is 11.27\,J.}
    \label{fig:power_edge_cloud}
\end{figure}
\begin{table}[h]
\centering
\caption{Per-Run Latency and Board Energy at 1\,Mbps for Edge--Cloud Configuration ($100$\% LTL ratio)}
\label{tab:edge_cloud}
\begin{tabular}{cccccc}
\toprule
Run &
\begin{tabular}{c}NPU\\Encode (ms)\end{tabular} &
\begin{tabular}{c}server\_recv\\(ms)\end{tabular} &
\begin{tabular}{c}LLM\\Decode (ms)\end{tabular} &
\begin{tabular}{c}E2E\\(ms)\end{tabular} &
\begin{tabular}{c}Enc. Energy\\(J)\end{tabular} \\
\midrule
1  & 3187.4 & 1618.4 & 529.1 & 6953.6 & 11.19 \\
2  & 3165.3 & 1617.7 & 487.1 & 6889.3 & 11.11 \\
3  & 3191.0 & 1617.7 & 480.1 & 6908.1 & 11.20 \\
4  & 3166.2 & 1617.7 & 485.4 & 6888.7 & 11.12 \\
5  & 3578.8 & 1616.8 & 480.4 & 7293.5 & 12.57 \\
6  & 3167.6 & 1617.3 & 476.4 & 6880.3 & 11.12 \\
7  & 3161.0 & 1617.3 & 484.3 & 6881.5 & 11.10 \\
8  & 3166.1 & 1617.4 & 483.7 & 6886.2 & 11.12 \\
9  & 3169.2 & 1617.7 & 484.0 & 6890.1 & 11.13 \\
10 & 3155.1 & 1617.0 & 484.4 & 6874.0 & 11.08 \\
\midrule
\textbf{Mean} &
  \textbf{3210.8} & \textbf{1617.7} & \textbf{487.5} &
  \textbf{6934.5} & \textbf{11.27} \\
\textbf{Std.} &
  $\pm$123.1 & $\pm$0.4 & $\pm$14.2 &
  $\pm$121.6 & $\pm$0.46 \\
\bottomrule
\end{tabular}
\end{table}

Excluding Run~5 (3578.8\,ms, attributable to transient OS
scheduling interference. See Table~\ref{tab:edge_cloud}), NPU encoding is tightly bounded:
the remaining nine runs have a mean of 3171\,ms and a standard
deviation of only $\pm$11\,ms (CV\,=\,0.3\%).
Network transfer is essentially deterministic
(CV\,=\,0.03\%), which is expected because the 192.3-KiB latent
payload plus TBF shaping yields a repeatable 1.62-s upload.
The end-to-end composition is therefore 46.3\% on-board NPU
encode, 23.3\% network, 7.0\% cloud LLM decode, with the
remainder attributable to protocol and scheduling overhead.
Peak RSS is 343.7\,MB, of which 254.6\,MB is the
vision-encoder ONNX session.

\subsection{Batch-Size Ablation of the Vision Encoder}

To isolate the effect of back-end choice on the vision encoder,
we sweep the batch size from 1 to 8 on both the CPU and the
Neutron NPU, with \texttt{max\_new\_tokens} fixed at 64.
The input image is replicated within each batch, so
that vision encoding is the only variable under test.
For each configuration, we report the per-image vision-encoder
time (Vis/img), the total generation time (Gen), the aggregate
throughput, and peak RSS.

\begin{table}[h]
\centering
\caption{Performance of CPU vs.\ Neutron NPU on Various Batch Sizes}
\label{tab:batch}
\setlength{\tabcolsep}{4pt}
\begin{tabular}{c|cccccc|cccccc}
\toprule
\multirow{2}{*}{BS} &
\multicolumn{5}{c}{CPU} & &
\multicolumn{5}{c}{NPU} \\
\cmidrule(lr){2-6} \cmidrule(lr){8-12}
 &
\begin{tabular}{c}Vis/img\\(ms)\end{tabular} &
\begin{tabular}{c}TTFT\\(ms)\end{tabular} &
\begin{tabular}{c}Gen\\(ms)\end{tabular} &
\begin{tabular}{c}Thrpt\\(tok/s)\end{tabular} &
\begin{tabular}{c}RSS\\(MB)\end{tabular} & &
\begin{tabular}{c}Vis/img\\(ms)\end{tabular} &
\begin{tabular}{c}TTFT\\(ms)\end{tabular} &
\begin{tabular}{c}Gen\\(ms)\end{tabular} &
\begin{tabular}{c}Thrpt\\(tok/s)\end{tabular} &
\begin{tabular}{c}RSS\\(MB)\end{tabular} \\
\midrule
1 & 9029.8 & 10554.6 & 5892.4 & 10.86 & 653.1 &
  & 3198.6 &  4469.2 & 4385.4 & 14.59 & 600.0 \\
2 & 9004.1 & 21028.9 & 8183.1 & 15.64 & 876.5 &
  & 2644.8 &  7669.8 & 6668.6 & 19.19 & 802.7 \\
3 & 9127.3 & 31933.5 & 9459.6 & 20.30 & 1085.4 &
  & 2482.0 & 10953.0 & 6754.1 & 28.43 & 1015.4 \\
4 & 9425.0 & 43871.5 & 10022.5 & 25.54 & 1195.7 &
  & 2368.3 & 14082.1 & 7420.5 & 34.50 & 1121.0 \\
5 & 9519.8 & 54841.1 & 11679.7 & 27.40 & 1297.0 &
  & 2311.6 & 17194.4 & 8218.2 & 38.94 & 1231.1 \\
6 & 9437.6 & 65211.1 & 12717.0 & 30.20 & 1463.4 &
  & 2327.7 & 20662.1 & 8980.8 & 42.76 & 1365.2 \\
7 & 9683.1 & 78767.9 & 15515.6 & 28.87 & 1582.5 &
  & 2251.4 & 23698.0 & 10236.0 & 43.77 & 1537.5 \\
8 & 9485.3 & 88417.5 & 16395.0 & 31.23 & 1820.6 &
  & 2234.1 & 26866.5 & 10688.1 & 47.90 & 1787.4 \\
\bottomrule
\end{tabular}
\end{table}

Three patterns are consistent across the sweep.
First, CPU per-image vision-encoder time is essentially flat
($\sim$9.0--9.7\,s), indicating that the CPU path does not
exploit batch-level parallelism.
The Neutron NPU, by contrast, reduces Vis/img from 3198.6\,ms
at BS\,=\,1 to 2234.1\,ms at BS\,=\,8, a 30\% reduction, and
widens its speedup over CPU on vision encoding from
$2.82\times$ ($9029.8/3198.6$) at BS\,=\,1 to
$4.25\times$ ($9485.3/2234.1$) at BS\,=\,8.
Second, NPU throughput grows monotonically with batch size
through BS\,=\,8 (47.90\,tok/s), with no sign of saturation,
whereas CPU throughput exhibits a dip at BS\,=\,7
(28.87\,tok/s) before partially recovering at BS\,=\,8,
consistent with a memory-bandwidth bottleneck on the shared DDR
path.
Third, RSS grows by approximately 150\,MB per additional batch
sample on both back ends, so memory is the binding constraint
at large batch sizes on the i.MX95.
For the single-image inference setting reported in the main text
(BS\,=\,1), the NPU back end reduces TTFT from 10554.6\,ms to
4469.2\,ms (a $2.36\times$ speedup), bringing TTFT
within the range usable for interactive applications.